\DeclarePairedDelimiter\set\{\}
\newcommand{\Ob}{\mathbf{O}}
\newcommand{\Pe}{\mathbf{P}}
\newcommand{\Li}{\mathbf{L}}
\begin{document}
\begin{frontmatter}
\title{Modeling Contrary-to-Duty with CP-nets }
\author{Roberta Calegari} 
\address{University of Bologna \\ roberta.calegari@unibo.it}  
\author{Andrea Loreggia}
\address{European University Institute \\ andrea.loreggia@gmail.com} 
\author{Emiliano Lorini}
\address{IRIT \\ lorini@irit.fr}
\author{Francesca Rossi}
\address{IBM Research \\ francesca.Rossi2@ibm.com}
\author{Giovanni Sartor}
\address{European University Institute  \\ giovanni.sartor@gmail.com}

\begin{abstract}
In a ceteris-paribus semantics for deontic logic, a state of affairs where a larger set of prescriptions is respected is preferable to a state of affairs where some of them are violated. Conditional preference nets (CP-nets) are a compact formalism to express and analyse ceteris paribus preferences, which nice computational properties. This paper shows how deontic concepts can be captured through  conditional preference models. A restricted deontic logic will be defined, and   mapped into conditional preference nets.  We shall also show how to model contrary to duties obligations in CP-nets and how to capture in this formalism the distinction between strong and weak permission.
\end{abstract}

\begin{keyword}
    Deontic logic, conditional preference models, ceteris-paribus semantics, contrary to duty, strong and weak permission.
\end{keyword}
\end{frontmatter}
 
\section{Introduction}
Modelling deontic notions through preferences \cite{Hansson2007SV} has the advantage of linking deontic notions to the manifold  research on preferences, in multiple disciplines, such as  philosophy, mathematics, economics and  politics. In recent years, preferences have also been addressed  within  AI \cite{pigozzi2016preferences,domshlak2011preferences,rossi2011short} and applications can be found in multi-agent systems \cite{shoham2008multiagent} and recommender systems \cite{ricci2011introduction}.  

We shall model deontic notions through ceteris-paribus preferences, namely, conditional preferences for a state of affairs over another state of affairs, all the rest being equal. In particular,  we shall focus on the ceteris-paribus preference for a proposition over its complement. The idea of ceteris-paribus preferences was  originally introduced by the philosopher and logician Georg von Wright \cite{von1963logic}. It provides the intuition at the basis  CP-nets \cite{boutilier1999reasoning}, a compact formalism which allows for representing preferences and reasoning about them. 

Though some contributions have linked deontic concepts to ceteris paribus preferences (see \cite{BenthemGrossiLiu2011On}), none has so far focused on CP-nets.  While the intersection between deontic logic and CP-nets is yet unexplored, we think a cooperation among researches from the two fields can be beneficial, and possibly lead to a new workable approach to modelling norms and reason about them. 

We shall indeed assume that norms establishing an obligation can be viewed as expressing a social preference for situations in which  the obligation is complied with over situations in which it is violated. Similarly, we shall assume that norms establishing liberties (bilateral permissions)  express indifference over complementary situations. 

We shall discuss two controversial topics in deontic logic, namely contrary to duty obligations and the distinction between strong and weak permissions.  

We shall show that a consistent CP-net can be built that captures both an obligation and the prescriptions specifying what should to be done in case the obligation is violated, which may be incompatible with prescriptions on what should be done in case of compliance  \cite{Chisholm1957PP}. We shall illustrate contrary to duties obligations with an amusing example discussed in the literature. However, realistic examples of contrary to duty obligations can be found in various domains, such as in commercial contracts, where repair obligations can be established for delay or non-fulfilment: to inform, compensate the damage, replace defective goods, pay penalties, etc. (for a logical model see \cite{GovernatoriRotolo2011JD}. It can also be found in civil liability, where harmful behaviour is met with obligations to repair, mitigate, and pay punitive damages in case of persistence.

We shall also show that a CP-net model can capture concepts of strong and weak permission, a distinction that has been long discussed in deontic logic  \cite{vonWright1963NA} and even earlier in legal theory, in connection issues concerning the completeness of legal systems \cite{Zitelmann1903LR}. We shall indeed  distinguish  the case in which an action is positively permitted, through a permissive norm, and the case in which  the action is rather unregulated, as no norm, and in particular, no prohibition addresses it. 

\section{A running example}\label{example}
In this section, we introduce a running example, concerning the presence of cats, dogs, and fences in beach houses (developing the example from \cite{prakken1997dyadic}). The example is used throughout the paper to explicate new notions when they are introduced.

\begin{example}[Running example] 
\label{ex:marypref}
Mary is the mayor of the Cattown, the city of cat lovers. She knows that her community, with few exceptions,  dislikes dogs and likes cat. Cattown people also dislike the sight of fences around houses, and generally prefer  to be able to move around and visit each other without the obstacle of fences.  However, there was a problem  with dogs entering other people's property and causing fear and sometimes harm. Moreover, people are fuzzy about the way in which their town looks,  they like that all houses and street furniture are white, as indeed they are. Mary believes that her role consists in defining  policies that  fit the  preferences of her constituency. Thus, she has enacted the  following, making them mandatory in Cattown: 
\begin{itemize}
    \item dogs are forbidden
    \item people may have a cat or not
    \item if there is dog, then there  should be a fence
    \item if there is a fence, it should be white.
    \item if there is no dog, then there should be no fence.
\end{itemize}
Though most people in her constituency like these policies, some do not. In particular, there a few dog lovers in the city, who did not share the approach to dogs prevailing in Cattown.  However, Mary believes that --given the preferences of the majority and the history and peculiar spirit of Cattown-- her regulation  articulates a social perspective, namely, a view of the community on what it prefers and  a view that can justifiably be  imposed on all its citizens. If a dog lover has decided to move into Cattown, he should have known what he would find there. 

An issue has recently emerged concerning bobcats. It is being debated whether in the absence of a provision prohibiting or allowing bobcats,  John is allowed to keep this bobcat in his  garden. Mary thinks that he is not, he should have asked for it, while John believes that he is. Can they conflict of opinion be explained away as merely concerning a conceptual misunderstanding (on the notion of a permission)?  
\end{example}

\section{Deontic language and Contrary to duty}\label{deontic}
To formally express rules such as those enacted by Mary, a restricted deontic language is sufficient. Here we provide a specification of this language.
Let $\mathit{Atm}$ be a countable set of atomic propositions and let $\mathit{Lit_{Atm}} = \mathit{Atm} \cup \{ \neg p : p \in \mathit{Atm} \}$, be the corresponding set of literals. Under this assumption, we can identify propositional valuations (or worlds) with maximal consistent conjunctions of literals.

\begin{definition}[A restricted norm language].  The  language $\mathcal{L}_{\mathrm{RDL}}(Atm)$ over a set of atoms $Atm$ is defined as follows:
\begin{itemize}
\item a norm of $Atm$ is any expression having the form $\Ob(\psi|\phi)$ or  $\Pe(\psi|\phi)$  where $\phi$ is a literal in $\mathit{Lit_{Atm}}$ and $\psi$ is a conjunction of such literals
\item $\mathcal{L}_{\mathrm{RDL}}(\mathit{Atm})$ is the set of all norms of $Atm$ and conjunctions of them.
\end{itemize}
\end{definition}
Formulas $\Ob(\psi|\phi)$ and $\Pe(\psi|\phi)$ have to be read, respectively, ``under condition $\psi$, $\phi$ is obligatory" and ``under condition $\psi$, $\phi$ is permitted."
Note the limitation in our language that does not allow for deontic operators $\Ob$ and $\Pe$ to be applied to disjunctions.  

Unconditional obligation and permission do not need to be added as primitives in the language of the logic as they are definable from  conditional obligation and permission. We also do not need a primitive for the bilateral permission, or liberty, which consists in the permission both of a proposition and of its complement.
\begin{definition}\label{noncondcond}
For all $\varphi \in \mathcal{L}_{\mathrm{RDL}}(\mathit{Atm})$:\\
\[
\begin{array}{rcl}
\Ob(\phi) & =_{def} & \Ob(\top|\phi)\\
\Pe(\phi) &=_{def}  & \Pe(\top|\phi)\\
\Li(\psi|\phi) &=_{def}& \Pe(\psi|\phi)\land \Pe(\psi|\neg\phi)\\
\Li(\phi) & =_{def}  & \Li(\top|\phi)\\
\end{array}
\]
\end{definition}

Using this language and taking into account the previous proposition, we can model obligations and permissions depicted in Example \ref{ex:marypref}. Thus we use the following abbreviations $d$ for "there is a dog", $c$ for "there is a cat", $f$ for "there is a fence", $w$ for "the fence is white", $b$ for "there is a bobcat". We can express the norms enacted by Mary as follows:
$$
\begin{array}{l}
    \Ob(\neg d)\\
    \Li(c)\\
    \Ob(\neg d|\neg f)\\
    \Ob(d|f)\\
    \Ob(f|w)
\end{array}
$$

\subsection{Contrary to duty obligations}
The norms in our running example  include two ``contrary to duty'' obligations, namely, obligations that are triggered by the violation of another obligation. According to the norm $\Ob(d|f)$, having a dog, and thus violating  norm $\Ob(\neg d$) (the prohibition to have  dogs), triggers the obligation to have a fence. Similarly, according to the norm  $\Ob(f|w)$, having a fence,  and thus violating  norm  $\Ob(\neg f$) (the prohibition  to have a fence),  triggers the obligation to have it white.

Contrary to duty obligations cannot be captured by standard deontic logic, according to whose semantics a proposition is obligatory if and only if it is true in every perfect (ideal) world, a world in which everything is as it should be. But the world in which a contrary to duty obligation is triggered is subideal one, since in it another obligation is violated, and it remains subideal even if the contrary to duty obligation is complied with. In all perfect  worlds in Cattown there are no dogs, and in such worlds there are also no fences in Cattown. So, according to the semantic for standard deontic logic  there cannot be an obligation to have fences. 
Various attempts have been made to capture the idea of contrary to duty (see for instance \cite{van-der-Torre1999,carmo2002-deontic,governatori2006}).

The problem is that a semantics that just distinguishes between ideal and non-ideal worlds cannot capture contrary to duty obligations: compliance with a contrary to duty obligation takes us to a world that is better -- ceteris paribus -- then a world in which the contrary to duty is not complied with, but which is still imperfect. A world with dogs and fences is ceteris paribus better than a world which does not have fences, but the former is still worse than a world in which, ceteris paribus, there are no dogs. 

To capture contrary to duty obligations we need a semantics that distinguishes different levels of preferability, so that we can distinguish the imperfect situation in which the main duty is violated, but the contrary to duty obligation is complied, from the even more imperfect situation in which both the main duty and the contrary to duty obligation are violated.

\subsection{Strong and weak permissions}
Our running example also shows the difference between strong (explicit) and weak (tacit) permissions, an issue much debated within deontic logic. On the one hand, cats are regulated: there is a norm that deals with cats ($\Li(c)$), by stating that both having and not having cats is allowed. On the other hand, nothing is said about bobcats. In other terms,  while the normative system expresses an equal preference for having and not having cats (both are OK), it express no attitude towards  bobcats. 

This distinction too cannot be captured by the semantics for standard deontic logic, where $\Pe(\phi)$ means that there is at least one perfect world in which $\phi$ is true, in which case  $\neg\Ob(\neg \phi)$ is true. 

In legal logic, the most popular perspective to distinguish strong and weak permissions is an inferential one. To say that $\phi$ is strongly permitted relative to a normative system means that the normative entails $\Pe(\phi)$. To say that $\phi$ is weakly permitted means that the normative system under consideration does not entail a corresponding prohibition $\Ob(\neg\phi)$ .

Here we shall provide for a different way to capture the difference between strong and weak permission, based on the distinction between indifference (equal ceteris-paribus preference) and incomparability (absence of any preference). 

\section{CP-nets}\label{CP-net}
In order to be able to represent the  described scenario, we propose to leverage on existing preference frameworks from AI, namely CP-nets.
A CP-net \cite{boutilier1999reasoning} is a compact representation of conditional preferences in the ceteris paribus semantics. Conditional preferences describe how the preference of an individual for a specific feature can depend on the choice over some other features. A common example is "I prefer to drink red wine if meat is served. Otherwise, if fish is served, I prefer to drink white wine, all the rest being equal”. CP-nets and related extensions of the formalism \cite{CGMR+13a,CGGM+15a} are often on multi-valued variables, this means that variables are not strictly boolean. Nevertheless, in this work we assume that all the variables have binary domains.  We start by introducing and defining some useful notions.

\subsection{Ceteris-Paribus Preferences}
\label{cppref}
We assume a set of variables (also called features or attributes) $V = \{X_1, \ldots ,X_n\}$ each one with binary domain, i.e. the domain of each $X_i \in V$ is such that $Dom(X_i) = \{x_i,\overline{x_i}\}$. An instantiation $Ass(X)$, where $X \subseteq V$, is an assignment of values to $X$. When $X=V$ then we call it a \emph{complete assignment} or \emph{outcome} or \emph{world}, otherwise we call it a \emph{partial assignment}. 
The notation $o[V_i]$ returns the value of the variable $V_i$ in the outcome $o$, while $o[-V_i]$ returns the value of all the variables but $V_i$ in the outcome $o$.

A preorder is a reflexive and transitive binary relation $\preceq$ over the set of all outcomes $O=2^V$, which are usually denoted by $o, w, \ldots, v$. In what follows, $w \preceq v$ means that $v$ is at least as good/ideal as $w$, $w \approx v$ means that $w$ is equivalent to $v$ which is an abbreviation for $w \preceq v$ and $v \preceq w$. Moreover, $w \prec v$ means that $v$ is better/more ideal than $v$ to be an abbreviation of $w \preceq v$ and $v \npreceq w$. Given two possible outcomes $w,v \in O$, if neither $w \preceq v$ nor $v \preceq w$  are valid, then we say that $w$ and $v$ are incomparable, denoted with $w \bowtie v$. 

We borrow some notations from \cite{boutilier1999reasoning} in order to define \emph{ceteris paribus} semantics.

\begin{definition}[Ceteris-Paribus Preference]
\label{def:cppref}
Given a set of variables $V$ and nonempty sets $X,Y,Z$ that partition $V$. Let $z \in Ass(Z)$ and $x_1, x_2 \in Ass(X)$, we say that $x_2$ is \emph{ceteris-paribus at least as good as} or \emph{better than} $x_1$ given $z$ if and only if for all $y_1, y_2 \in Ass(Y)$ 
$$
x_1y_1z \preceq x_2y_2z \text{ and } x_1y_2z \preceq x_2y_1z
$$ 
we use $z: x1 \preceq_Y x2$ ($\prec_Y$ resp.) as an abbreviation for it (we will omit the subscript when it is clear from the context).
\end{definition}

As the preference for  $x_1$ over  $x_2$, in the context of $z$, is independent from the values of Y,  we may say that $X$ is preferentially independent of $Y$ if $Z=\emptyset$ otherwise  that $X$ is said to be conditionally preferentially independent of $Y$ given the assignment of $Z$.

Moreover, from Definition \ref{def:cppref} we derive the following notation regarding ceteris-paribus relation between two outcomes $u,v $. 
\begin{definition}
Given two outcomes $u, v \in U$, we say that $u$ is \emph{ceteris-paribus at least as good as} or \emph{better than} $v$ relative to $X_i$, if and only if they differ only in the value of the variable $X_i$ and 
 we abbreviate it with $v \preceq_{u[X_i]} u$.
\end{definition}


Let us refer to Example \ref{ex:marypref}. Mary's community preferences are over a set of five features, $V= \set{C, D, F, W, B}$, with the following domains: $Dom(C)=\{c,\bar{c}\}$, $Dom(D)=\{d,\bar{d}\}$, $Dom(F)=\{f,\bar{f}\}$, $Dom(W)=\{w,\bar{w}\}$ and $Dom(B)=\{b,\bar{b}\}$. The meaning of variables are the same introduced in Section \ref{deontic} for atoms: for instance, variable $C$ concerns cats, it value is $c$ of $\bar{c}$ depending on whether there is a cat or not.

Since $V$ has 5 elements, the set of all outcomes $U = 2^{V}$ contains 32 possible outcomes. For the sake of readability, we do not consider in this section variable $W$ and $C$ in order to consider a reduced number of outcomes and make the partial orders limited in size. Thus, considering the subset $\{B,D,F\}$, the set of outcomes are denoted by the complete assignments to considered variables:

$$
U=
\{
 bdf,
 \bar{b}df,
  b\bar{d}f,
  bd\bar{f},
 \bar{b}d\bar{f},
 b\bar{d}\bar{f},
 \bar{b}\bar{d}f,
 \bar{b}\bar{d}\bar{f}
\}
$$
The different attitudes of the community can be represented with a set of ceteris paribus preferences. In particular, the negative attitude to dogs is represented by preference for $\neg$dogs-assignments over ceteris-paribus dog-assignments (i.e. $ d \prec_{V \setminus D} \bar{d}$):
$$
\begin{array}{l}
bdf \prec b\bar{d}f, 
\bar{b}df \prec \bar{b}\bar{d}f,
bd\bar{f}, b\bar{d}\bar{f},
\bar{b}d\bar{f}\prec \bar{b}\bar{d}\bar{f}
\end{array}
$$
Preference is ceteris-paribus (about $d$) in the sense that it only concerns the comparison between dog-assignments and $\neg$dog-assignments that are equal in all the rest (bobcats and fences). These are all pair of outcomes $u$ and $v$ such that the evaluations of $u$ and $v$ coincide in all values except for $D$.

Preference for having fences when there are dogs in the house is captured by  preferences for fence-assignments over the ceteris-paribus $\neg$-fence-assignments, taking only dog-assignments into consideration (i.e. $d: \bar{f} \prec_{B} f$):
$$
\begin{array}{l}
bd\bar{f} \prec bdf, \bar{b}d\bar{f} \prec \bar{b}df
\end{array}
$$

Instead, unawareness to bobcats can be modelled through incomparability between bobcat-assignments and $\neg$-bobcat-assignments, which expresses unfamiliarity to the presence of bobcats, i.e. for all $x_1,x_2 \in Ass(D)$ and $y_1, y_2 \in Ass(F): bx_1y_1 \bowtie \bar{b}x_2y_2$.

At last, let us consider to have cats and not to have them, this can be represented through the indifference for cat-assignments over ceteris paribus $\neg$-cat-assignments (i.e. $ c \approx_{V \setminus C} \bar{c}$). These are all pair of outcomes $u$ and $v$ where $u[-C]=v[-C]$ such that $v$ is weakly preferable to $u$ and vice-versa ($u \preceq v \land v \preceq u$), that is for instance:
$$
\begin{array}{l}
cb\bar{d}f \approx \bar{c}b\bar{d}f, cbd\bar{f} \approx \bar{c}bd\bar{f}
\end{array}
$$


\subsection{Incomparability}

In the previous section, we introduced the notion of incompatibility. In the ceteris paribus semantics, the notion of incomparability does not allow to differentiate between two different states of affairs: one where two outcomes are incomparable because nothing is said about some values of a  feature, and another, where two or more variables are independent and thus preferences are described over the domains of these variables no matter the assignment of the others. Both cases induce a preference graph where some outcomes are incomparable because there does not exist a path between them, i.e. given two incomparable outcomes $w,v$ then $w \npreceq^T v$, $v \npreceq^T w$,$w \nprec^T v$ and $v \nprec^T w$. 
But, while the former case describes a lack of information, possibly because the individual does not know or does not have any information about some features (for instance in our example the lack of information about bobcats), the latter describes a situation where preferences are reported on all the features but not on the combination of them. The lack of dependencies seems to be simpler to fix: the individual already has all the information on the features, thus the incomparability can be avoid by adding a dependency between some of the variables. 

\begin{definition}[Strong incomparability]
Given a preorder $P$, two outcomes $w,v \in P$ are said to be \emph{strongly incomparable} if they belong to two different  components of $P$.
\end{definition}

Strong incomparability describes the relation among outcomes of different components; they are incomparable because some information is missing, specifically preferences are not reported for some features. This lack of information can be hard to fix, since it entails a lack of knowledge.

The following definition refers to \emph{weakly connected} components: a directed graph is said to be weakly connected if the undirected graph resulting from removing the orientation of the edges is connected, i.e. if any pair of vertexes $w,v$ in the undirected graph has a path from $w$ to $v$.

\begin{definition}[Weak incomparability]
Given a preorder $P$, two outcomes $w,v \in P$ are said to be \emph{weakly incomparable} if they belong to the same weakly connected component of $P$ but there does not exist a path from $w$ to $v$ or vice-versa.
\end{definition}

Weak incomparability describes a less problematic situation. In this case, preferences are reported but     the incomparability is due to some missing dependencies among features. Thus, correcting this kind of incomparability can be quite simple.

\subsection{Conditional Preference Networks}

Given the above definitions, we can now define conditional preference networks (CP-net).

\begin{definition}
A CP-net over a set of binary variables $V=\{V_1, \ldots,V_n\}$ is a tuple $\mathcal{N}=(G,CPT)$, where $G=(V,E)$ is a directed graph and $CPT=\{CPT(V_i) | V_i \in V\}$ is a set of conditional preference tables (or CP-tables). An edge $(V_i, V_j)\in E$ represents that preferences over $Dom(V_j)$ depend on the value of $V_i$.
\end{definition}

For each variable $V_i \in V,$ a  $CPT(V_i)$ is a set of cp-statements, each one represents an ordering over the specific domain $Dom(V_i)$ given the assignment to the parents of $V_i$, e.g. $CPT(D)=\{\bar{d} \prec d\}$. In its original formulation, each $CPT(V_i)$ reports strict linear order over the values of the domain of $V_i$ given the partial assignment to $Pa(V_i)$. A more recent extension, namely CP-net with indifference \cite{allen2013cp}, takes into account indifference and it also models lack of information through incomparability. In this work we adopt this extension of CP-net. 
The semantics is connected with the induced preference graph over all the outcomes, that is a complete assignment of values to variables. A directed edge between pair of outcomes $(o_i, o_j)$, which differ only in the value of one variable, means that $o_j \preceq o_i$. A \emph{worsening flip} is a change in the value of a variable to a less preferred value according to the cp-statement for that variable. 

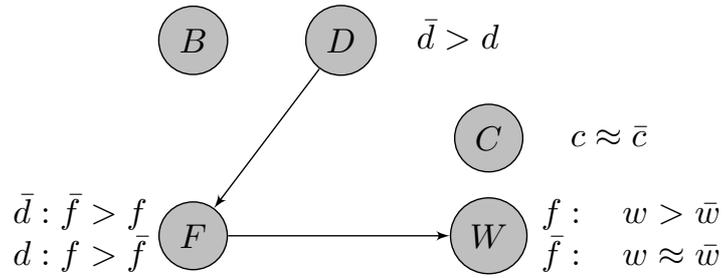
\begin{figure}
\centering
\resizebox{0.9\linewidth}{!}{
\begin{tikzpicture}
\tikzstyle{every node}=[draw,shape=circle,fill=gray!50];
\tikzset{edge/.style = {->,> = latex'}}
\node (B) at (0, 1) {$B$};
\node (C) at (3, 0) {$C$};
\node (D) at (1.5, 1) {$D$};
\node (F) at (0, -1) {$F$};
\node (W) at (3, -1) {$W$};
\draw[edge]  (D) to (F);
\draw[edge]  (F) to (W);

\coordinate [label=left: {
\begin{tabular}{c}
$ \bar{d} > d$ \\
\end{tabular}
}] (p) at (3.5,1);

\coordinate [label=left: {
\begin{tabular}{c}
$ c \approx \bar{c}$ \\
\end{tabular}
}] (p) at (5,0);

\coordinate [label=left: {
\begin{tabular}{c}
$ \bar{d}: \bar{f} > f$ \\
$ d: f > \bar{f}$ \\
\end{tabular}
}] (p) at (0,-1);
\coordinate [label=right: {
\begin{tabular}{cc}
$ f: $	& $ w > \bar{w}$ \\
$ \bar{f}: $ &   $ w \approx \bar{w}$ \\
\end{tabular}
}] (p) at (3,-1);
\end{tikzpicture}
}
\caption{The CP-net with indifference which represents Mary's community preferences of Example \ref{ex:marypref}.}
\label{figure:excpnet1}
\end{figure}
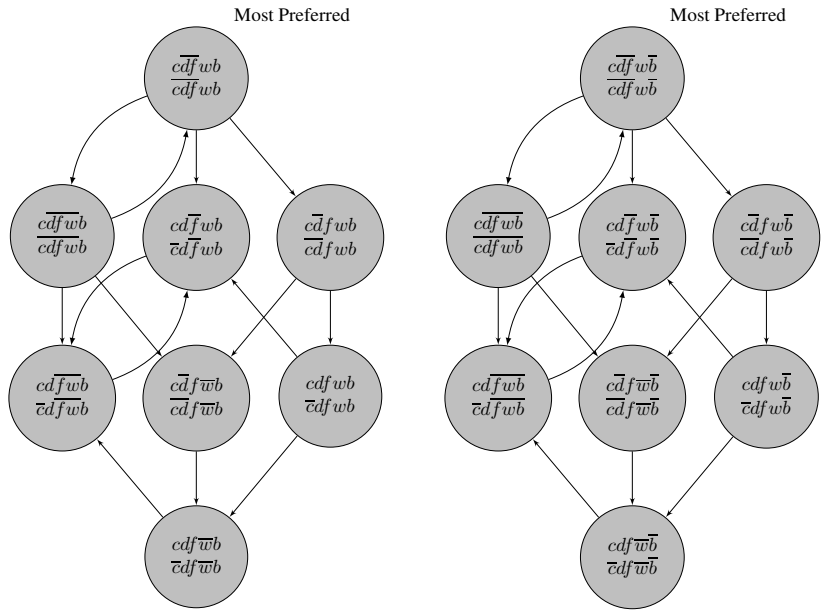
\begin{figure}
\centering
\resizebox{0.9\linewidth}{!}{
\begin{tikzpicture}
\tikzstyle{every node}=[draw,shape=circle,fill=gray!50];
\tikzset{edge/.style = {->,> = latex'}}
\node[label={[label distance=0.1]20:Most Preferred}] (a) at (-4, 0) {\begin{tabular}{c}
$c\overline{df}wb$ \\
$\overline{cdf}wb$ \\
\end{tabular}
};

\node[below=of a, xshift=-70] (b) 
{\begin{tabular}{c}
$c\overline{dfw}b$ \\
$\overline{cdfw}b$ \\
\end{tabular}
};

\node[below=of a, xshift=70] (c) 
{\begin{tabular}{c}
$c\overline{d}fwb$ \\
$\overline{cd}fwb$ \\
\end{tabular}
};

\node[below=of a] (d) 
{\begin{tabular}{c}
$cd\overline{f}wb$ \\
$\overline{c}d\overline{f}wb$ \\
\end{tabular}
};

\node[below=of d, xshift=-70] (e) 
{\begin{tabular}{c}
$cd\overline{fw}b$ \\
$\overline{c}d\overline{fw}b$ \\
\end{tabular}
};

\node[below=of d] (f) 
{\begin{tabular}{c}
$c\overline{d}f\overline{w}b$ \\
$\overline{cd}f\overline{w}b$ \\
\end{tabular}
};

\node[below=of d, xshift=70] (g) 
{\begin{tabular}{c}
$cdfwb$ \\
$\overline{c}dfwb$ \\
\end{tabular}
};

\node[below=of f] (h) 
{\begin{tabular}{c}
$cdf\overline{w}b$ \\
$\overline{c}df\overline{w}b$ \\
\end{tabular}
};

\draw[-latex, bend right]   (a) edge [bend right] (b);
\draw[-latex, bend right]   (b) edge [bend right] (a);
\draw[edge]   (a) -> (c);
\draw[edge]   (a) -> (d);
\draw[edge]   (b) -> (e);
\draw[edge]   (b) -> (f);
\draw[edge]   (c) -> (f);
\draw[edge]   (c) -> (g);
\draw[-latex, bend right]   (d) edge [bend right] (e);
\draw[-latex, bend right]   (e) edge [bend right] (d);
\draw[edge]   (g) -> (d);
\draw[edge]   (h) -> (e);
\draw[edge]   (f) -> (h);
\draw[edge]   (g) -> (h);

\node[label={[label distance=0.1]20:Most Preferred}] (a1) at (4, 0) 
{\begin{tabular}{c}
$c\overline{df}w\overline{b}$ \\
$\overline{cdf}w\overline{b}$ \\
\end{tabular}
};

\node[below=of a1, xshift=-70] (b1) 
{\begin{tabular}{c}
$c\overline{dfw}\overline{b}$ \\
$\overline{cdfw}\overline{b}$ \\
\end{tabular}
};

\node[below=of a1, xshift=70] (c1) 
{\begin{tabular}{c}
$c\overline{d}fw\overline{b}$ \\
$\overline{cd}fw\overline{b}$ \\
\end{tabular}
};

\node[below=of a1] (d1) 
{\begin{tabular}{c}
$cd\overline{f}w\overline{b}$ \\
$\overline{c}d\overline{f}w\overline{b}$ \\
\end{tabular}
};

\node[below=of d1, xshift=-70] (e1) 
{\begin{tabular}{c}
$cd\overline{fw}\overline{b}$ \\
$\overline{c}d\overline{fw}\overline{b}$ \\
\end{tabular}
};

\node[below=of d1] (f1) 
{\begin{tabular}{c}
$c\overline{d}f\overline{w}\overline{b}$ \\
$\overline{cd}f\overline{w}\overline{b}$ \\
\end{tabular}
};

\node[below=of d1, xshift=70] (g1) 
{\begin{tabular}{c}
$cdfw\overline{b}$ \\
$\overline{c}dfw\overline{b}$ \\
\end{tabular}
};

\node[below=of f1] (h1) 
{\begin{tabular}{c}
$cdf\overline{w}\overline{b}$ \\
$\overline{c}df\overline{w}\overline{b}$ \\
\end{tabular}
};

\draw[-latex, bend right]   (a1) edge [bend right] (b1);
\draw[-latex, bend right]   (b1) edge [bend right] (a1);
\draw[edge]   (a1) -> (c1);
\draw[edge]   (a1) -> (d1);
\draw[edge]   (b1) -> (e1);
\draw[edge]   (b1) -> (f1);
\draw[edge]   (c1) -> (f1);
\draw[edge]   (c1) -> (g1);
\draw[-latex, bend right]   (d1) edge [bend right] (e1);
\draw[-latex, bend right]   (e1) edge [bend right] (d1);
\draw[edge]   (g1) -> (d1);
\draw[edge]   (h1) -> (e1);
\draw[edge]   (f1) -> (h1);
\draw[edge]   (g1) -> (h1);

\end{tikzpicture}
}
\caption{The partial order induced by the CP-net: for the sake of readability, we group into the same nodes some outcomes of the preference model. Outcomes in the same node are indifferent, this is due to the indifference over the values of variable $C$.}
\label{figure:excpnet2}
\end{figure}

The semantics of the CP-net with indifference is a preorder over all the outcomes, i.e. a reflexive and transitive binary relation over the set of complete assignment of values to the variables of the CP-net. Thus, for any two outcomes $o, u$ which differs only on the value of one variable $X_i \in V$ we have:
\begin{itemize}
    \item $o \preceq u$ if $o[X_i] \preceq u[X_i]$ given $o[Pa(X_i)]$
    \item $o \bowtie u$ if there is not a cp-statements for $o[X_i], u[X_i]$ given $o[Pa(X_i)]$
\end{itemize}

The partial order induced by the CP-net is denoted as $Ord_{\mathcal{N}}$.

The CP-net formalism provides a qualitative compact representation that is useful to represent scenarios similar to the one depicted by the reduced deontic logic introduced in Section \ref{deontic}. 

Based on preferences described in Example \ref{ex:marypref}, Figure \ref{figure:excpnet1} reports the dependency graph and the CP-tables of the CP-net. The CP-net is over the set of variables $V=\{B,C,D,F,W\}$. Variable $B$ does not have a CP-table because the community did not express any preferences about bobcats. Notice the indifference on the values of variable $C$ which describes the liberty to  have or not to have cats. The strong attitude about dogs induces the strict order over the domain of the variable $D$. Orders over the variable $F$ depends on whether or not there is a dog and orders over the variable $W$ depends on whether or not there is a fence.

The partial order by the CP-net is reported in Figure \ref{figure:excpnet2}. The binary relation among outcomes is based on dependencies and preferences reported in the CP-tables of the CP-net. For instance, outcomes $c\overline{df}w\overline{b}$ and $\overline{cdf}w\overline{b}$ are in the same node due to the indifference on the values of the variable $C$ all the rest being equal, while $cd\overline{f}w\overline{b} \prec \overline{cdf}w\overline{b}$ because $\neg$dog-assignments are ceteris-paribus better than dog-assignments.

Notice that the preference graph has two components. Outcomes in the two components differs on the assignment of the variable $B$ for which no preferences are expressed, thus each component represent different scenario which differs from the other for the presence of bobcats. As we described in the previous section, this is represented with strong incomparability between outcomes of different components.

\section{Deontic Language and CP-nets: bridging the gap}\label{mapping}
A set of obligations and liberties expressed in the restricted language define in Section \ref{deontic} can be represented using a CP-net, we will call it a prescriptive CP-net. Note that we do not provide for the representation of unilateral permission. In fact we assume that unilateral permissions must  be implied either by an obligation (in case the complement is forbidden) or by a liberty (in case the complement is also permitted). 

Indeed, the restriction on obligations and liberties where antecedents are conjunctions of literals and consequent is a single literal is compatible with the syntax of CP-net with indifference \cite{allen2013cp}.
\subsection{Modeling Deontic Language with CP-nets}
In this section we define the CP-net that is induced by a given set of prescriptions.

\begin{definition}
Given a set of statements in $C \subseteq \mathcal{L}_{\mathrm{RDL}}(Atm)$ and the prescriptive CP-net $\mathcal{N_C}$. In $\mathcal{N_C}=(G,P)$, $G$ is a directed graph over a set of variables $V$ such that:
\begin{itemize}
    \item for each $v_i \in Atm$ corresponds a $V_i \in V$ with $Dom(V_i)=\{v_i, \bar{v_i}\}$
    \item each conditional obligation $\Ob(\psi|\phi) \in C$ and each liberty $\Li(\psi|\phi) \in C$, introduce dependencies in $G$. Specifically, for each literal $x_j \in \psi$, they introduce a directed edge between  $X_j$ and $\Phi$ in $G$, thus $X_j$ becomes a parent of $\Phi$
    \item each obligation $\Ob(\psi|\phi) \in C$ induces a strict order over $Dom(\Phi)$ given the assignment $\psi$ such that $CPT(\Phi)=\{\psi: \bar{\phi} \prec \phi\}$
    \item each conditional liberty $\Li(\psi|\phi) \in C$ induces a weak order over $Dom(\Phi)$ given the assignment $\psi$ such that $CPT(\Phi)=\{\psi: \bar{\phi} \approx \phi\}$. 
\end{itemize}
\label{prop:cpnet}
\end{definition}

Notice that variables with empty CP-tables or partially empty CP-tables may exist. These variables will induce incomparability in the preorder.

\subsection{Mapping Preference Models with CP-nets}
In this section we shall provide a ceteris paribus semantics for the deontic language provided above. In the next session, we shall show that this semantics identifies models that correspond to CP-nets. The semantics of the deontic language is defined in terms of preference relations on worlds.

\begin{definition}[Preference models of $\mathcal{L}_{\mathrm{RDL}}$] A preference model of $\mathcal{L}_{\mathrm{RDL}}, M=(\preceq, U)$ is a preorder $\preceq$ on the set U of outcomes.
\end{definition}

A set of ceteris-paribus conditional obligations and liberties $C$ is \emph{consistent} if it has at least one model, and inconsistent otherwise. A set of ceteris-paribus conditional obligations and liberties $C$ entails another preference formula $C'$, written $C \models C'$, if every model of $C$ is also a model of $C'$.
In the following definition, an outcome refers to conjunctions of propositional literals referring to each variable in $V$ exactly once, and which are consistent. Thus, for instance $\phi$ refers to valuation of variable $\Phi$.
Satisfaction for formulas in the language $\mathcal{L}_{\mathrm{RDL} }(\mathit{Atm} )$ is defined as follows,
where $u \models \varphi$
means that the propositional formula $\varphi$
is true at the valuation corresponding to the outcome 
$u$:
\begin{definition}[Satisfaction in $\mathcal{L}_{\mathrm{RDL}}$]\label{truthcond}
Given a set of norms $C \subseteq \mathcal{L}_{\mathrm{RDL}}$ and a preference model $M = ( U, \preceq )$, we say that $M$ satisfies $C$, if for each $\Ob(\psi|\phi), \Pe(\psi|\phi ) \in C$ the following hold:
\begin{eqnarray*}
M \models \Ob(\psi|\phi) & \Longleftrightarrow & \forall
    v,u \in U:
    v, u \models \psi,   \text{ if }
    v \models \phi \text{ and }\\
&&  v \preceq_{\phi} u
    \text{ then }  u \models \phi\\
M \models \Pe(\psi|\phi ) & \Longleftrightarrow & \forall
    v,u \in U: 
    v , u\models \psi,  \text{ if }
    v \models \phi \text{ and }\\
&&  v \prec_{\phi} u
    \text{ then }  u \models \phi
\end{eqnarray*}

The class of models satisfying a set of norms $C$ is denoted with $\mathcal{P}_C$. Among all the possible $M \in \mathcal{P}_C, rel(\mathcal{P}_C) \subseteq \mathcal{P}_C$ is the subset of models that refers only to consistent conjunctions of propositional literals in $C$.
Then $M_C = (U, \preceq_C)$ is the preference model such that for each 
$M= (U , \preceq) \in rel(\mathcal{P}_C), \preceq_C \subseteq \preceq$.
\end{definition}

\begin{proposition}
As a liberty is a bilateral permission ($\Li(\psi|\phi)=_{def} \Pe(\psi|\phi)\land \Pe(\psi|\neg\phi)$), we have that 
\begin{eqnarray*}
M \models \Li(\psi|\phi ) & \Longleftrightarrow & \forall
    v,u \in U: 
    v \models \psi, u \models \psi,\\
&&      \text{ if }
    v \models \phi \text{ and }
v \prec_{\phi} u
    \text{ then }  u \models \phi,  \text{ and }\\
&&      \text{ if }
    v \models \neg\phi \text{ and }
v \prec_{\neg\phi} u
    \text{ then }  u \models \neg\phi\\
\end{eqnarray*}
\end{proposition}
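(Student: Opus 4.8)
The plan is to derive the characterization by pure unfolding of definitions, since the proposition is a direct consequence of the definition of liberty together with the satisfaction clause for $\Pe$ already fixed in Definition~\ref{truthcond}. No new semantic ingredient is needed; the work is entirely bookkeeping.

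First I would invoke Definition~\ref{noncondcond}, where $\Li(\psi|\phi) =_{def} \Pe(\psi|\phi) \land \Pe(\psi|\neg\phi)$. Because model satisfaction distributes over conjunction, $M \models \Li(\psi|\phi)$ holds precisely when $M \models \Pe(\psi|\phi)$ and $M \models \Pe(\psi|\neg\phi)$ both hold. I would then expand each conjunct via the $\Pe$-clause of Definition~\ref{truthcond}. For $\Pe(\psi|\phi)$ the clause applies verbatim, giving: for all $v,u \in U$ with $v,u \models \psi$, if $v \models \phi$ and $v \prec_\phi u$ then $u \models \phi$. For $\Pe(\psi|\neg\phi)$ I would apply the same clause after the purely syntactic substitution of $\neg\phi$ for the consequent literal $\phi$ in all three of its occurrences, yielding: for all $v,u \in U$ with $v,u \models \psi$, if $v \models \neg\phi$ and $v \prec_{\neg\phi} u$ then $u \models \neg\phi$.

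Finally I would merge the two universally quantified statements into one. Both range over exactly the same pairs $v,u \in U$ restricted by $v,u \models \psi$, so their conjunction collapses to a single $\forall v,u$ asserting the two conditional clauses jointly, which is precisely the displayed formula. The only point demanding care --- and the closest thing to an obstacle --- is this merging step: it is legitimate because $(\forall v,u\, P) \land (\forall v,u\, Q)$ is equivalent to $\forall v,u\,(P \land Q)$ when the two statements share an identical quantifier prefix and domain restriction, which here they do. One should also note in passing that the two antecedents $v \models \phi$ and $v \models \neg\phi$ are mutually exclusive, so no interference arises when the clauses are read under the common quantifier. Everything else reduces to the definitions already in place.
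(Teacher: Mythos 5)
Your proof is correct and matches the paper's (implicit) argument exactly: the paper states this proposition without a separate proof, treating it as an immediate unfolding of $\Li(\psi|\phi)=_{def}\Pe(\psi|\phi)\land\Pe(\psi|\neg\phi)$ together with the $\Pe$-clause of Definition~\ref{truthcond}, which is precisely your expansion-and-merge of the two universally quantified permission clauses. Nothing is missing; the quantifier-merging step you flag is the only point of care, and you justify it correctly.
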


Given the previous definitions we now show that a set of ceteris-paribus norms in $\mathcal{L}_{\mathrm{RDL}}(Atm)$ is expressible with a prescriptive CP-net, this means that the preorder induced by the CP-net is a model which satisfies $C$.

\begin{theorem}
Given a set of norms $C \subseteq \mathcal{L}_{\mathrm{RDL}}(Atm)$  and the prescriptive CP-net $\mathcal{N}_{C}$, then $Ord_{\mathcal{N}_{C}} = M_C$.
\end{theorem}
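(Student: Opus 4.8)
The plan is to prove the equality by double inclusion, reading $\preceq_C$ as the intersection $\bigcap_{M \in rel(\mathcal{P}_C)} \preceq$, i.e.\ the least preorder contained in every relevant model of $C$, which is the canonical object singled out by the defining property ``$\preceq_C \subseteq \preceq$ for each $M \in rel(\mathcal{P}_C)$''.

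For the inclusion $M_C \subseteq Ord_{\mathcal{N}_C}$, I would first show that $Ord_{\mathcal{N}_C}$ is \emph{itself} a relevant model of $C$, so that $Ord_{\mathcal{N}_C} \in rel(\mathcal{P}_C)$ and the inclusion follows immediately from the defining property of $M_C$. Concretely I check each norm against Definition~\ref{truthcond}. For an obligation $\Ob(\psi|\phi) \in C$, Definition~\ref{prop:cpnet} makes every literal of $\psi$ a parent of $\Phi$ and puts $\psi: \bar{\phi} \prec \phi$ into $CPT(\Phi)$; hence for any two outcomes $v,u \models \psi$ differing only in $\Phi$ the induced order places the $\bar{\phi}$-outcome strictly below the $\phi$-outcome, which is exactly the satisfaction clause for $\Ob(\psi|\phi)$. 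The same argument with $CPT(\Phi)=\{\psi: \bar{\phi} \approx \phi\}$ handles each liberty $\Li(\psi|\phi)$ via the indifference demanded by the derived clause of the Proposition. Since the nodes of $Ord_{\mathcal{N}_C}$ are complete, consistent assignments, the model is relevant, giving $Ord_{\mathcal{N}_C} \in rel(\mathcal{P}_C)$.

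For the reverse inclusion $Ord_{\mathcal{N}_C} \subseteq M_C$, I would argue that every ordered pair of $Ord_{\mathcal{N}_C}$ is forced to appear in every $M \in rel(\mathcal{P}_C)$, whence it lies in the intersection $\preceq_C$. Because $Ord_{\mathcal{N}_C}$ is the transitive closure of single-variable worsening and indifference flips, and every relevant $\preceq$ is already transitive, it suffices to treat a single flip. A worsening flip between $o, o'$ that agree on $Pa(\Phi)$ with value $\psi$ and differ only in $\Phi$ is generated by a statement $\psi: \bar{\phi} \prec \phi$ arising from some $\Ob(\psi|\phi) \in C$; by Definition~\ref{truthcond} every model satisfying that obligation orders the $\bar{\phi}$-outcome below the $\phi$-outcome, so the flip belongs to $\preceq$ for all relevant $M$, and the liberty case is handled identically through the Proposition. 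Transitive closure then yields $Ord_{\mathcal{N}_C} \subseteq \preceq_C = M_C$.

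The main obstacle is making the ceteris-paribus bookkeeping exact in both directions simultaneously. On one side I must verify that the parent set prescribed by Definition~\ref{prop:cpnet} coincides precisely with the ``all the rest being equal'' quantification of Definition~\ref{def:cppref}: a flip is licensed by the CP-net exactly when $\psi$ fixes the parents and the remaining variables are left free, and this must match the variables left unconstrained by the satisfaction clauses. On the other side, the equality $Ord_{\mathcal{N}_C} = M_C$ also asserts that $M_C$ contains \emph{no} preferences beyond those of the CP-net, and establishing this forces me to show that every pair left incomparable by $Ord_{\mathcal{N}_C}$ — whether through an empty or partially empty CP-table (strong incomparability) or through missing dependencies (weak incomparability) — is genuinely unforced. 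The delicate step is to exhibit, for each such incomparable pair, relevant models of $C$ that order it in opposite ways (or leave it incomparable), so that the intersection defining $\preceq_C$ cannot relate them, matching the $\bowtie$ of the CP-net.
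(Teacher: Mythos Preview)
Your plan is correct, and it takes a genuinely different route from the paper. The paper does not argue by double inclusion of preorders; instead it asserts that the equality $Ord_{\mathcal{N}_C}=M_C$ is ``equivalent to'' showing, for every norm $N\in C$, that $M_C\models N$ iff $Ord_{\mathcal{N}_C}\models N$, and then checks this biconditional by unfolding the satisfaction clause in the forward direction and using a one-line contrapositive (``if $Ord_{\mathcal{N}_C}\nvDash\Ob(\psi|\phi)$ then $\Ob(\psi|\phi)\notin C$'') for the backward direction. That reduction is never justified: two preorders can satisfy the same set of norms and still differ. Your approach closes exactly that gap. By reading $\preceq_C$ as the intersection over $rel(\mathcal{P}_C)$, showing $Ord_{\mathcal{N}_C}\in rel(\mathcal{P}_C)$ gives $\preceq_C\subseteq Ord_{\mathcal{N}_C}$ for free, and the flip-by-flip argument together with transitivity gives the reverse inclusion. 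You also correctly isolate the one subtle point the paper's sketch never touches, namely that pairs left incomparable by the CP-net must be witnessed as unforced by exhibiting separating relevant models. What the paper's version buys is brevity; what yours buys is an actual proof of equality of orders rather than merely of agreement on $C$.
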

\begin{proof}
This is equivalent to prove that for each norm $N \in C$
\begin{equation}\label{eq:equation1}
 M_C \models N 
 \text{ iff } Ord_{\mathcal{N}_{C}} \models N 
\end{equation}
We show how to prove (\ref{eq:equation1}) showing how this can be done for obligations. Let start by assuming that $M_C \models \Ob(\psi|\phi)$, by definition this is equivalent to 
$\forall u,v \in U: u \prec_{\psi,\phi} v$, thus for for consistency of $\mathcal{N_C}$ with $C$:
\begin{eqnarray*}
Ord_{\mathcal{N}_{C}} & \models & \psi: \neg\phi \prec \phi\\
Ord_{\mathcal{N}_{C}} & \models & \forall y_1, y_2 \in (V-\Psi-\{\Phi\}): \neg\phi\psi y_1 \prec \phi\psi y_2\\
\end{eqnarray*}
by definition, this is equivalent to
\begin{eqnarray*}
Ord_{\mathcal{N}_{C}} & \models & \forall u,v \in U: u,v \models \psi, u \models \neg\phi, v \models \phi\\
Ord_{\mathcal{N}_{C}} & \models & \Ob(\psi|\phi)
\end{eqnarray*}
This proves $\implies$ direction. We show the other way round ($\Longleftarrow$) by contradiction. Let us assume that $Ord_{\mathcal{N}_{C}} \nvDash \Ob(\psi|\phi)$, thus $\Ob(\psi|\phi) \notin \mathcal{N_C}$ but this means that also $\Ob(\psi|\phi) \notin C$. But this is a contradiction.
\end{proof}

On the basis of the CP-net modellin described above the following propsitions hold.
\begin{proposition}\label{propframe} A contrary to duty framework $\Ob(\phi), \Ob(\neg\phi|\psi),\Ob(\phi|\neg \psi)$ is modelled by a consistent CP-net.
\end{proposition}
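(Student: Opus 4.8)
The plan is to run the construction of Definition~\ref{prop:cpnet} on the three norms, check that the resulting dependency graph is acyclic, and then use acyclicity to conclude consistency; the identity $Ord_{\mathcal{N}_C}=M_C$ established in the preceding theorem then upgrades this to the statement that a consistent CP-net models the whole framework.

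First I would write $\mathcal{N}_C$ out explicitly over the two binary variables $\Phi$ and $\Psi$. The primary obligation $\Ob(\phi)=\Ob(\top|\phi)$ has empty antecedent, so it adds no edge and yields the unconditional table $CPT(\Phi)=\{\neg\phi\prec\phi\}$. Each of the two conditional obligations has a single-literal antecedent over $\Phi$ and a consequent over $\Psi$, so by Definition~\ref{prop:cpnet} both introduce the one edge $\Phi\to\Psi$ and fill one row of $CPT(\Psi)$: the contrary-to-duty norm $\Ob(\neg\phi|\psi)$ gives $\neg\phi:\neg\psi\prec\psi$ and the compliance norm $\Ob(\phi|\neg\psi)$ gives $\phi:\psi\prec\neg\psi$. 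The dependency graph is thus the single arc $\Phi\to\Psi$, which is acyclic, and both CP-tables are complete (each conditioning context fixes exactly one strict order over its variable).

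Next I would turn acyclicity into consistency. Since the graph is acyclic, the worsening-flip semantics of Section~\ref{CP-net} cannot produce a cycle, so $Ord_{\mathcal{N}_C}$ is a genuine (strict) partial order and the CP-net $\mathcal{N}_C$ is consistent \cite{boutilier1999reasoning}. Explicitly, over the four outcomes the flips generate the chain
\[
\neg\phi\neg\psi \prec \neg\phi\psi \prec \phi\psi \prec \phi\neg\psi ,
\]
whose top $\phi\neg\psi$ is the primary-compliant world in which, in that context, the repair $\psi$ is correctly omitted, and whose bottom $\neg\phi\neg\psi$ is the doubly-bad world where the primary duty is violated and the repair is nonetheless omitted. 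Checking this order against Definition~\ref{truthcond} is routine: each ceteris-paribus $\Phi$-pair ranks the $\phi$-outcome above its $\neg\phi$-counterpart, satisfying $\Ob(\phi)$; within the $\neg\phi$-worlds the $\psi$-outcome dominates, satisfying $\Ob(\neg\phi|\psi)$; and within the $\phi$-worlds the $\neg\psi$-outcome dominates, satisfying $\Ob(\phi|\neg\psi)$. Since this order is a model of the three-norm set and coincides with $Ord_{\mathcal{N}_C}$, the framework is consistent.

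I expect the real content to lie in explaining away the apparent conflict between the two obligations that bear on $\Psi$: the contrary-to-duty norm pushes toward $\psi$ while the compliance norm pushes toward $\neg\psi$. The point is that these opposite preferences are conditioned on the complementary values of the parent $\Phi$, so they live in different rows of $CPT(\Psi)$ and never clash inside a single conditioning context; it is exactly the edge $\Phi\to\Psi$ that realises the ``different levels of preferability'' demanded by the motivation and that keeps the induced order acyclic. Once acyclicity is in hand, consistency is immediate and, via $Ord_{\mathcal{N}_C}=M_C$, the constructed consistent CP-net is a model of $\Ob(\phi),\Ob(\neg\phi|\psi),\Ob(\phi|\neg\psi)$.
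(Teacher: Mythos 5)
Your proposal is correct and takes essentially the same route as the paper: the paper's proof simply exhibits, in two figures, the very CP-net you construct (single edge $\Phi\to\Psi$, with $CPT(\Phi)=\{\bar{\phi}\prec\phi\}$ and $CPT(\Psi)=\{\phi:\psi\prec\bar{\psi},\ \bar{\phi}:\bar{\psi}\prec\psi\}$) together with its induced order $\bar{\phi}\bar{\psi}\prec\bar{\phi}\psi\prec\phi\psi\prec\phi\bar{\psi}$, which coincides with your chain. Your write-up merely makes explicit what the paper leaves to the figures, namely the acyclicity-implies-consistency step and the check that the induced order satisfies the three norms.
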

\begin{proof}
The CP-net is reported in Figure \ref{figure:excpnet3}.

\begin{figure}[htb]
\centering
\begin{tikzpicture}
\tikzstyle{every node}=[draw,shape=circle,fill=gray!50];
\tikzset{edge/.style = {->,> = latex'}}
\node (A) at (0, 1) {$\Phi$};
\node (B) at (0, -1) {$\Psi$};

\draw[edge]  (A) to (B);

\coordinate [label=left: {
\begin{tabular}{c}
$ \phi > \bar{\phi}$ \\
\end{tabular}
}] (p) at (0,1);

\coordinate [label=left: {
\begin{tabular}{c}
$\phi: \bar{\psi} > \psi$ \\
$\bar{\phi}: \psi > \bar{\psi}$ \\
\end{tabular}
}] (p) at (0,-1);
\end{tikzpicture}
\caption{The prescriptive CP-net induced by the duty framework of Proposition \ref{propframe}.}
\end{figure}
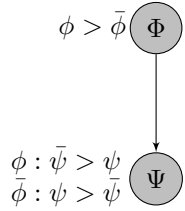

\begin{figure}
\centering
\begin{tikzpicture}
\tikzstyle{every node}=[draw,shape=circle,fill=gray!50];
\tikzset{edge/.style = {->,> = latex'}}
\node[label={[label distance=0.1]20:Most Preferred}] (a) 
{\begin{tabular}{c}
$\phi\bar{\psi}$ \\
\end{tabular}
};

\node[below=of a, xshift=-70] (b) 
{\begin{tabular}{c}
$\phi\psi$ \\
\end{tabular}
};

\node[below=of a, xshift=70] (c) 
{\begin{tabular}{c}
$\bar{\phi}\bar{\psi}$ \\
\end{tabular}
};

\node[below=of c, xshift=-70] (d) 
{\begin{tabular}{c}
$\bar{\phi}\psi$ \\
\end{tabular}
};

\draw[edge]  (a) to (b);
\draw[edge]  (b) to (d);
\draw[edge]  (a) to (c);
\draw[edge]  (d) to (c);

\end{tikzpicture}
\caption{The partial order induced by the prescriptive CP-net which satisfies the duty framework of Proposition \ref{propframe}.}
\label{figure:excpnet3}
\end{figure}
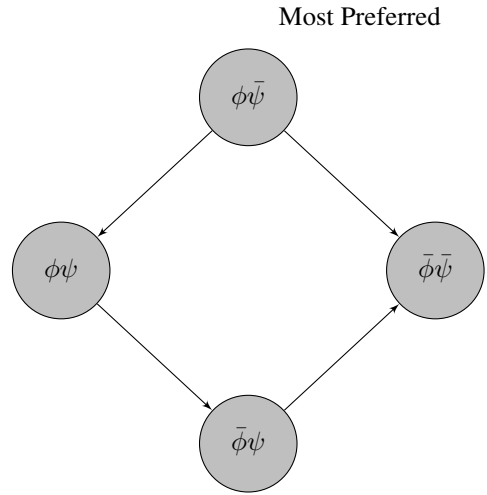
\end{proof}


\section{Conclusions}\label{conclusion}
Our analysis shows that a set of norms expressed in deontic logic can be translated into a corresponding CP-net without loss of semantics, thus providing a conditional preference network giving a compact representation.

On the contrary, CP-nets allow to model differ scenarios and to infer interesting properties and information about the set of norms. The main problems that CP-nets allow to tackle are reasoning about the dominance testing and consistency testing. Both of them are in general far from easy. In the first problem we want to decide whether one outcome $v$ dominates $u$, i.e. $u \prec v$. In the  second problem we want to find out whether there is a dominance cycle in the preorder defined by a CP-net, i.e. whether there is an outcome that dominates (is preferred to) itself. In general both dominance and consistency for CP-nets can be NP-complete \cite{boutilier1999reasoning,goldsmith2008computational}. 
But when the CP-net is over a set of binary variables, and its dependency graph is a tree, the dominance testing becomes linear in the number of variables, if it is a poly-tree than dominance testing becomes polynomial in the number of variables \cite{boutilier1999reasoning}. Moreover, in many scenarios, the presence of cycles in the dependency graph means that the network is inconsistent \cite{domshlak2002cp}. Thus, the consistency testing would allow to determine when a set of norms is inconsistent because contradictory while the dominance test allows to test whether a world is better than another.
Moreover, in recent studies, preferences represented with compact representation are used in metric spaces in order to define how similar they are \cite{li2018efficient,loreggia2018distance}. Furthermore, as preference orders are exponential in the number of considered variables, it is of main importance being able to find feasible ways to compute the distance among preferences exploiting the compact representation of such domains.

The results presented here represent just a preliminary exploration of intersection between CP-nets and deontic logic, but it can provide a starting point for further research.

\bibliographystyle{deon16}
\bibliography{ijcai20}

\end{document}